\def\BibTeX{{\rm B\kern-.05em{\sc i\kern-.025em b}\kern-.08em
    T\kern-.1667em\lower.7ex\hbox{E}\kern-.125emX}}
\newcommand{\E}{\operatorname{\mathbb E}}
\newcommand{\innermid}{\;\middle\lvert\;}
\newtheorem{remark}{Remark}
\newtheorem{corollary}{Corollary}
\newtheorem{theorem}{Theorem}
\newtheorem{assumption}{Assumption}
\author{Kai Cui, Mengguang Li, Christian Fabian and Heinz Koeppl
\thanks{This work has been co-funded by the LOEWE initiative (Hesse, Germany) within the emergenCITY center, the State of Hesse and HOLM as part of the "Innovations in Logistics and Mobility" programme of the Hessian Ministry of Economics, Energy, Transport and Housing (HA project no.: 1010/21-12), and the Hessian Ministry of Science and the Arts (HMWK) within the projects "The Third Wave of Artificial Intelligence - 3AI" and hessian.AI. The authors acknowledge the Lichtenberg high performance computing cluster of the TU Darmstadt for providing computational facilities for the calculations of this research.}
\thanks{The authors are with the Department of Electrical Engineering and Information Technology, Technische Universität Darmstadt, 64287 Darmstadt, Germany. (e-mail: {\tt\small  \{kai.cui, mengguang.li, christian.fabian, heinz.koeppl\}@tu-darmstadt.de}).}
}
\title{\LARGE \bf
Scalable Task-Driven Robotic Swarm Control via Collision Avoidance and Learning Mean-Field Control
}
\begin{document}

\maketitle
\thispagestyle{empty}
\pagestyle{empty}

\begin{abstract}
In recent years, reinforcement learning and its multi-agent analogue have achieved great success in solving various complex control problems. However, multi-agent reinforcement learning remains challenging both in its theoretical analysis and empirical design of algorithms, especially for large swarms of embodied robotic agents where a definitive toolchain remains part of active research. We use emerging state-of-the-art mean-field control techniques in order to convert many-agent swarm control into more classical single-agent control of distributions. This allows profiting from advances in single-agent reinforcement learning at the cost of assuming weak interaction between agents. However, the mean-field model is violated by the nature of real systems with embodied, physically colliding agents. Thus, we combine collision avoidance and learning of mean-field control into a unified framework for tractably designing intelligent robotic swarm behavior. On the theoretical side, we provide novel approximation guarantees for general mean-field control both in continuous spaces and with collision avoidance. On the practical side, we show that our approach outperforms multi-agent reinforcement learning and allows for decentralized open-loop application while avoiding collisions, both in simulation and real UAV swarms. Overall, we propose a framework for the design of swarm behavior that is both mathematically well-founded and practically useful, enabling the solution of otherwise intractable swarm problems.
\end{abstract}

\section{Introduction}
Over the past decades, the field of swarm robotics \cite{brambilla2013swarm, schranz2020swarm, chung2018survey} has received considerable attention \cite{dorigo2020reflections}. Various areas of potential applications include for example industrial inspection tasks \cite{correll2006system}, such as for turbines, cooperative object transport \cite{tuci2018cooperative, grobeta2008evolution, gross2009towards}, agriculture \cite{albani2018dynamic}, aerial combat \cite{xing2019offense}, and cooperative search \cite{vincent2004framework}. A recent promising approach for engineering many-agent systems such as intelligent robot swarms is multi-agent reinforcement learning (MARL) \cite{zhang2021multi}, which has found success in diverse complex problems such as strategic video games \cite{berner2019dota}, communication networks \cite{al2015application} or traffic control \cite{chu2019multi}. However, MARL algorithms suffer from issues such as credit assignment, non-stationarity and scalability to many agents \cite{zhang2021multi}. Meanwhile, robotic swarms such as fleets of unmanned aerial vehicles (UAVs) usually consist of many interacting UAVs and remain of considerable interest due to their inherent robustness, scalability to large-scale deployment and decentralization, which can be considered the ultimate goal of the study of swarm intelligence and robotics \cite{brambilla2013swarm, hamann2018swarm}. Here, scalable control approaches and highly general toolchains for swarm robotics remain to be established \cite{schranz2020swarm}. 

A classical approach to formulate systems with large numbers of agents with low complexity is via mean-field models, describing swarms of drones by their distribution, see also \cite{elamvazhuthi2019mean} and \cite{bensoussan2013mean} for reviews on mean-field swarm robotics and mean-field control (MFC). However, most prior literature is based on analytic derivations and continuous-time models, which are less conducive to advances in MARL. For example, stabilizing control of swarms to distributions are designed in \cite{elamvazhuthi2017mean, deshmukh2018mean, elamvazhuthi2018mean}. Other works such as \cite{mayya2018localization, mayya2019closed} consider population density estimates via collisions for task allocation problems, while \cite{lerman2001macroscopic} study robots for stick-pulling. Lastly, a variety of approaches use PDE-based formulations, e.g. \cite{eren2017velocity, zheng2021transporting} for density control, or \cite{milutinovic2006modeling, hamann2008framework} for general analytic frameworks, though they are significantly more difficult to treat both rigorously and from a learning perspective. Especially mean-field-based learning algorithms often remain restricted to competitive settings such as mean-field games \cite{lasry2007mean, huang2006large} by learning e.g. Nash \cite{yang2018mean, guo2019learning, subramanian2019reinforcement, perolat2021scaling, guo2022mf}, regularized \cite{anahtarci2020q, cui2021approximately} or correlated equilibria \cite{campi2022correlated, muller2022learning}. For instance, works such as \cite{gao2022energy} or \cite{shiri2019massive} investigate trajectory control of selfish UAV agents, while \cite{wang2022mean} considers formation flight in dense environments. Although selfish control problems are interesting for many applications, aligning selfish or local cost functions with a certain cooperative, global behavior can be difficult \cite{vsovsic2018reinforcement}. Solutions for cooperative joint objectives without necessity of manual cost function tuning are therefore of practical interest for artificially engineering swarm behaviors. 

In this work, we propose a discrete-time MFC-based swarm robotics framework that is conducive to powerful deep reinforcement learning (RL) techniques. Only very recently were MFC \cite{carmona2019model, gu2021mean, mondal2021approximation} and related histogram observations for MARL \cite{huttenrauch2019deep} proposed as a potential solution to cooperative scalable MARL, which could enable both the solution of otherwise intractable tasks as well as model-free application to swarms, adapting to environments and tasks. However, an eminent issue of MFC for robotic systems is violation of the MFC model due to physical collisions between robots. To solve this issue, we combine MFC with deep RL and collision avoidance algorithms. Here, collision avoidance algorithms could range from classical rule-based \cite{fiorini1998motion} over planning-based \cite{hamer2018fast} to learning-based approaches \cite{everett2021collision, ourari2022nearest}, and similarly for RL, see e.g. \cite{arulkumaran2017deep}. Importantly, our approach (i) is able to utilize advances in RL, circumventing MARL and solving otherwise difficult swarm problems without extensive manual and analytical design of algorithms, and (ii) closes the gap between mean-field models and reality, as collisions between agents violate the weak interaction principle of mean-field models and are usually to be avoided, e.g. in UAVs. As a result, our approach is highly practical, with the advantage of automatic design of swarm algorithms for swarm problems.

Our contribution can be summarized as follows: (i) We combine RL with MFC and collision avoidance algorithms for general task-driven control of robotic swarms; (ii) We give novel theoretical approximation guarantees of MFC in finite swarms as well as in the presence of additional collision avoidance maneuvers; (iii) We demonstrate in a variety of tasks that MFC outperforms state-of-the-art MARL, can be applied in a decentralized open-loop manner and avoids collisions, both in simulation and real UAV swarms. Overall, we provide a general framework for tractable swarm control that could be applied directly to swarms of UAVs.

\section{Swarm model}
In order to tractably describe a plethora of swarm tasks, we formulate a mean-field model where all agents are anonymous and it is sufficient to consider their distribution. 

\subsection{Finite swarm model}
Formally, we consider compact state and action spaces $\mathcal X, \mathcal U \subseteq \mathbb R^2$ (though our results are easily extended to $\mathbb R^3$) representing possible locations and movement choices of an agent. For any $N \in \mathbb N$, at each time $t = 0, 1, \ldots$, the states and actions of agent $i = 1, \ldots, N$ are denoted by $x^{i,N}_t$ and $u^{i,N}_t$. We denote by $\mathcal P(\mathcal X)$ the space of probability measures on $\mathcal X$, equipped with the topology of weak convergence. Define the empirical state distribution $\mu_t^N = \frac{1}{N} \sum_{i=1}^N \delta_{x_t^{i,N}} \in \mathcal P(\mathcal X)$, which represents all agents anonymously by their states. We consider policies $\pi = \{ \pi_t \}_{t \geq 0} \in \Pi$ from a space of policies $\Pi$ with shared Lipschitz constant, such that agents act on their location and the distribution of all agents, $\pi_t \colon \mathcal X \times \mathcal P(\mathcal X) \to \mathcal P(\mathcal U)$. The assumption of Lipschitz continuity is standard in the literature, includes e.g. neural networks \cite{mondal2021approximation, pasztor2021efficient, mondal2022on}, and may allow approximation of less regular policies.

Under a policy $\pi \in \Pi$, the finite swarm system shall evolve by sampling an initial state $x^{i,N}_{0} \sim \mu_0$ from an initial distribution $\mu_0$ of agents, and subsequently taking movement actions $u^{i,N}_t \sim \pi_t(x^{i,N}_t, \mu_t^N)$, resulting in new states $x^{i,N}_{t+1} = x^{i,N}_t + u^{i,N}_t + \epsilon^i_t$ for all agents $i$ with optional i.i.d. Gaussian noise $\epsilon^i_t \sim \mathcal N(0, \Sigma)$ and diagonal covariance matrix $\Sigma = \mathrm{diag}(\sigma_1^2, \sigma_2^2)$. In other words, each drone can move a distance limited to $\mathcal U$, up to some smoothing or inaccuracy $\epsilon^i_t$. In simulation, we further clip agent positions to stay inside $\mathcal X$. The objective is then given by an arbitrary function $r \colon \mathcal P(\mathcal X) \to \mathbb R$ of the spatial distribution of agents, giving rise to the infinite-horizon discounted objective
\begin{align}
    J^N(\pi) = \E \left[ \sum_{t=0}^{\infty} \gamma^t r(\mu^N_t) \right].
\end{align}

Since MARL can be difficult in the presence of many agents (see e.g. combinatorial nature in \cite{zhang2021multi}), we will formulate and verify a limiting infinite-agent system.

\subsection{Mean-field swarm model}
In the limit as $N \to \infty$, single agents become indiscernible and we need only model their distribution (mean-field) $\mu_t \in \mathcal P(\mathcal X)$. Starting at $\mu_0$, under policy $\pi \in \Pi$, deterministically
\begin{multline}
    \mu_{t+1} = T^{\pi_t}(\mu_t) \equiv T^{\pi_t(\cdot \mid \cdot, \mu_t)}(\mu_t) \\
    \coloneqq \iint \mathcal N(x + u, \sigma^2) \, \pi_t(\mathrm du \mid x, \mu_t) \, \mu_t(\mathrm dx)
\end{multline}
with shorthand of the deterministic mean-field transition operator $T^{\pi_t}(\mu_t) \equiv T^{\pi_t(\cdot \mid \cdot, \mu_t)}(\mu_t)$, $\pi_t(\cdot \mid \cdot, \mu_t) \in \mathcal P(\mathcal U)^{\mathcal X}$, giving way to the MFC problem with objective function
\begin{align}
    J(\pi) = \E \left[ \sum_{t=0}^{\infty} \gamma^t r(\mu_t) \right].
\end{align}

\begin{remark} \label{remark1}
A dependence of $r$ on joint state-action distributions in $\mathcal P(\mathcal X \times \mathcal U)$ can be modelled by splitting time steps into two and using the new state space $\mathcal X \cup (\mathcal X \times \mathcal U)$.
\end{remark}

For simplicity of analysis, we assume absence of common noise, leading to a deterministic mean-field limit, though in our experiments we also allow reactions to a random external environment. Under a mild continuity assumption, weaker than the common Lipschitz assumption in existing literature \cite{mondal2021approximation, mondal2022on}, we obtain rigorous approximation guarantees.

\begin{assumption} \label{ass}
The reward function $r$ is continuous.
\end{assumption}

By compactness of $\mathcal P(\mathcal X)$, $r$ is bounded. As long as $r$ is continuous, i.e. small changes in the agent distribution lead to small changes in reward, the MFC model is a good approximation for large swarms and its solution solves the finite agent system approximately optimally. As existing approximation properties still remain limited to finite $\mathcal X$, $\mathcal U$ \cite{mondal2021approximation, gu2021mean}, we give a brief, novel proof for compact spaces.

\begin{theorem} \label{thm:conv}
Under Assumption~\ref{ass}, at all times $t \in \mathcal T$, the empirical reward $r(\mu_t^N)$ converges weakly and uniformly to the limiting reward $r(\mu_t)$ as $N \to \infty$, i.e.
\begin{align}
    \sup_{\pi \in \Pi} \E \left[ \left| r(\mu^N_{t}) - r(\mu_{t}) \right| \right] \to 0.
\end{align}
\end{theorem}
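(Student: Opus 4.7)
My plan is to proceed by induction on $t$, combined with compactness and continuity arguments that reduce the reward convergence to weak convergence of empirical measures. Since $\mathcal X$ is compact, $\mathcal P(\mathcal X)$ with the topology of weak convergence is itself compact and metrizable; I would pick the bounded-Lipschitz (Dudley) metric $d_{BL}$ for concreteness. Assumption~\ref{ass} together with compactness then makes $r$ bounded and uniformly continuous on $\mathcal P(\mathcal X)$, so it is enough to establish the stronger statement
\begin{align*}
    \sup_{\pi \in \Pi} \E\!\left[ d_{BL}(\mu^N_t, \mu_t) \right] \to 0 \quad \text{for every fixed } t,
\end{align*}
after which the theorem follows by uniform continuity and bounded convergence (noting $|r(\mu^N_t) - r(\mu_t)| \le 2 \|r\|_\infty$).

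The base case $t=0$ is immediate: the $x^{i,N}_0$ are i.i.d.\ $\mu_0$ and independent of $\pi$, so Varadarajan's empirical measure theorem gives $d_{BL}(\mu^N_0, \mu_0) \to 0$ almost surely, and boundedness of $d_{BL}$ on $\mathcal P(\mathcal X)$ upgrades this to $L^1$, trivially uniformly in $\pi$. For the inductive step, using the paper's shorthand $T^{\pi_t}(\nu) \equiv T^{\pi_t(\cdot\mid\cdot,\nu)}(\nu)$, I would add and subtract the one-step propagation of the empirical measure and use the triangle inequality
\begin{align*}
    d_{BL}(\mu^N_{t+1}, \mu_{t+1}) \le d_{BL}\!\bigl(\mu^N_{t+1}, T^{\pi_t}(\mu^N_t)\bigr) + d_{BL}\!\bigl(T^{\pi_t}(\mu^N_t), T^{\pi_t}(\mu_t)\bigr).
\end{align*}
The first, \emph{stochastic}, term is handled by a conditional law-of-large-numbers argument: given the history $\mathcal F_t$ up to time $t$, the successor states $x^{i,N}_{t+1}$ are conditionally independent draws from the kernel $x \mapsto \int \mathcal N(x+u,\Sigma)\, \pi_t(\mathrm du \mid x, \mu^N_t)$, and a direct computation shows $\E[\frac{1}{N}\sum_i f(x^{i,N}_{t+1}) \mid \mathcal F_t] = \int f \, \mathrm d T^{\pi_t}(\mu^N_t)$ with conditional variance $O(\|f\|_\infty^2/N)$ for every bounded Lipschitz $f$. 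An $\varepsilon$-net argument on the unit ball of bounded-Lipschitz functions on compact $\mathcal X$ (precompact in $C(\mathcal X)$ by Arzel\`a--Ascoli) then promotes this pointwise bound to $\E[d_{BL}(\mu^N_{t+1}, T^{\pi_t}(\mu^N_t))] \to 0$, with constants independent of $\pi$. The second, \emph{propagation}, term is controlled by Lipschitz continuity of $\nu \mapsto T^{\pi_t(\cdot\mid\cdot,\nu)}(\nu)$, which follows from the shared Lipschitz constant of $\Pi$ in the mean-field argument together with smoothness of convolution with the Gaussian kernel; the resulting Lipschitz constant $L$ depends only on $\Pi$ and $\Sigma$, so $d_{BL}(T^{\pi_t}(\mu^N_t), T^{\pi_t}(\mu_t)) \le L \, d_{BL}(\mu^N_t, \mu_t)$ and the inductive hypothesis closes the induction uniformly in $\pi$.

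The main obstacle I anticipate is preserving uniformity in $\pi$ at every step. This hinges on two ingredients: $\mu_0$ and the Gaussian noise are $\pi$-free, so the base case is uniform for free; and the shared Lipschitz hypothesis on $\Pi$ makes the propagation constant $L$ independent of the particular policy, which would otherwise fail. The Monte Carlo step produces an error depending on $N$ and on the covering numbers of the BL unit ball in $C(\mathcal X)$, but not on $\pi$. Since the theorem fixes $t$, I do not need to control constants growing with $t$, which would otherwise require a discounting-type argument. Weakening the shared-Lipschitz assumption would force an additional equicontinuity or uniform covering argument over $\Pi$, which is exactly what the shared-constant assumption was introduced to avoid.
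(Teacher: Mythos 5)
Your proposal is correct and shares the paper's overall skeleton: induction on $t$, insertion of the one-step propagated empirical measure $T^{\pi_t}(\mu^N_t)$ via the triangle inequality, a conditional law-of-large-numbers bound for the sampling error, and regularity of the transition operator for the propagation error, all reduced to the reward statement through uniform continuity of $r$ on the compact space $\mathcal P(\mathcal X)$. The technical realization, however, is genuinely different in both halves of the induction step. You metrize weak convergence by $d_{BL}$ and close the induction with a true Lipschitz contraction $d_{BL}\bigl(T^{\pi_t}(\mu^N_t), T^{\pi_t}(\mu_t)\bigr) \le L\, d_{BL}(\mu^N_t,\mu_t)$; the paper instead uses the countable-test-function metric $d(\mu,\nu)=\sum_m 2^{-m}|\mu(f_m)-\nu(f_m)|$, with respect to which $T^{\pi_t}$ is only guaranteed a uniform modulus of continuity $\omega_T$, and therefore runs the induction over arbitrary equicontinuous families $\mathcal F$ of functions on $\mathcal P(\mathcal X)$, replacing $\mathcal F$ by $\mathcal G=\{f\circ T^{\pi_t}\}$ with modulus $\omega_{\mathcal G}=\omega_{\mathcal F}\circ\omega_T$ at each step (together with Jensen's inequality for the concave modulus). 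Your choice buys a cleaner, more classical propagation-of-chaos step, at the price of an Arzel\`a--Ascoli $\varepsilon$-net over the BL unit ball to turn the per-function variance bound into a bound on $\E[d_{BL}]$; the paper's metric is built from a fixed countable family with $|f_m|\le 1$, so its LLN step is a one-line $\sup_m$ bound of order $N^{-1/2}$ with no covering argument, but it must carry the heavier function-class induction hypothesis. Both arguments are uniform in $\pi$ for exactly the reasons you identify. One point to make explicit when writing your version out: the Lipschitz constant $L$ of $\nu\mapsto T^{\pi_t(\cdot\mid\cdot,\nu)}(\nu)$ collects two contributions, the change of the integrating measure $\nu$ against a fixed integrand and the change of the policy kernel $\pi_t(\cdot\mid x,\nu)$ in its measure argument; both are controlled by the shared Lipschitz constant of $\Pi$ and the boundedness of the Gaussian kernel's smoothing, so the claim stands, but it deserves a line of justification.
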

\begin{proof}
We can metrize $\mathcal P(\mathcal X)$ via the metric $d(\mu, \nu) \coloneqq \sum_{m=1}^\infty 2^{-m} | \mu(f_m) - \nu(f_m) |$ for a sequence of continuous and bounded $f_m \colon \mathcal X \to \mathbb R$, $|f_m| \leq 1$ (cf. \cite[Theorem~6.6]{parthasarathy2005probability}).

Consider any (uniformly) equicontinuous set $\mathcal F \subseteq \mathbb R^{\mathcal P(\mathcal X)}$ of functions, i.e. there exists an increasing (concave, cf. \cite[p. 41]{devore1993constructive}) $\omega_{\mathcal F} \colon [0, \infty) \to [0, \infty)$ (modulus of continuity) such that $\omega_{\mathcal F}(x) \to 0$ when $x \to 0$ and $|f(\mu) - f(\nu)| \leq \omega_{\mathcal F}(d(\mu, \nu))$ for all $f \in \mathcal F$. We show inductively for $t \geq 0$ that
\begin{align} \label{eq:ind}
    \sup_{\pi \in \Pi} \sup_{f \in \mathcal F} \E \left[ \left| f(\mu^N_{t}) - f(\mu_{t}) \right| \right] \to 0,
\end{align}
which implies the desired property, since $r$ is uniformly continuous by compactness of $\mathcal P(\mathcal X)$ and Assumption~\ref{ass}.

At time $t=0$, the proof follows from the weak law of large numbers (LLN) argument (see \eqref{eq:first} and below). For the induction step,
\begin{align}
    &\sup_{\pi \in \Pi} \sup_{f \in \mathcal F} \E \left[ \left| f(\mu^N_{t+1}) - f(\mu_{t+1}) \right| \right] \\
    &\quad \leq \sup_{\pi \in \Pi} \E \left[ \omega_{\mathcal F}(d(\mu^N_{t+1}, T^{\pi_t}(\mu^N_t)) \right] \label{eq:first} \\
    &\qquad + \sup_{\pi \in \Pi} \sup_{f \in \mathcal F} \E \left[ \left| f(T^{\pi_t}(\mu^N_t)) - f(\mu_{t+1}) \right| \right] \label{eq:second}
\end{align}
where for the first term \eqref{eq:first}, by Jensen's inequality we obtain 
\begin{align*}
    &\E \left[ \omega_{\mathcal F}(d(\mu^N_{t+1}, T^{\pi_t}(\mu^N_t)) \right] \leq \omega_{\mathcal F} \left( \E \left[ d(\mu^N_{t+1}, T^{\pi_t}(\mu^N_t) \right] \right)
\end{align*}
for concave $\omega_{\mathcal F}$. Abbreviating $x^N_t \equiv \{x^{i,N}_t\}_{i\in[N]}$, we have
\begin{align*}
    &\E \left[ d(\mu^N_{t+1}, T^{\pi_t}(\mu^N_t) \right] \\
    &\quad = \sum_{m=1}^\infty 2^{-m} \E \left[ \left| \mu^N_{t+1}(f_m) - T^{\pi_t}(\mu^N_t)(f_m) \right| \right] \\
    &\quad \leq \sup_{m \geq 1} \E \left[ \E \left[ \left| \mu^N_{t+1}(f_m) - T^{\pi_t}(\mu^N_t)(f_m) \right| \innermid x^N_t \right] \right],
\end{align*}
where by the weak LLN argument, the squared term
\begin{align*}
    &\E \left[ \left| \mu^N_{t+1}(f_m) - T^{\pi_t}(\mu^N_t)(f_m) \right| \innermid x^N_t \right]^2 \\
    & \leq \E \left[ \left| \frac 1 N \sum_{i=1}^N \left( f_m(x^{i,N}_{t+1}) - \E \left[ f_m(x^{i,N}_{t+1}) \innermid x^N_t \right] \right) \right|^2 \innermid x^N_t \right] \\
    & = \frac{1}{N^2} \sum_{i=1}^N \E \left[ \left( f_m(x^{i,N}_{t+1}) - \E \left[ f_m(x^{i,N}_{t+1}) \innermid x^N_t \right] \right)^2 \innermid x^N_t \right] \\
    & \leq \frac{4}{N} \to 0
\end{align*}
since for any $f_m$, the cross-terms are zero and $|f_m| \leq 1$.

For the second term \eqref{eq:second}, by induction assumption we have
\begin{align*}
    &\sup_{\pi \in \Pi} \sup_{f \in \mathcal F} \E \left[ \left| f(T^{\pi_t}(\mu^N_t)) - f(\mu_{t+1}) \right| \right] \\
    &\quad \leq \sup_{\pi \in \Pi} \sup_{g \in \mathcal G} \E \left[ \left| g(\mu^N_t) - g(\mu_t) \right| \right] \to 0
\end{align*}
using $g = f \circ T^{\pi_t}$ and the corresponding class $\mathcal G$ of functions with modulus of continuity $\omega_{\mathcal G} \coloneqq \omega_{\mathcal F} \circ \omega_{T}$, where $\omega_T$ denotes the uniform modulus of continuity of $T^{\pi_t}$ by uniform Lipschitz continuity of $\pi \in \Pi$.
\end{proof}

\begin{figure}[b]
    \centering
    \includegraphics[width=0.8\linewidth]{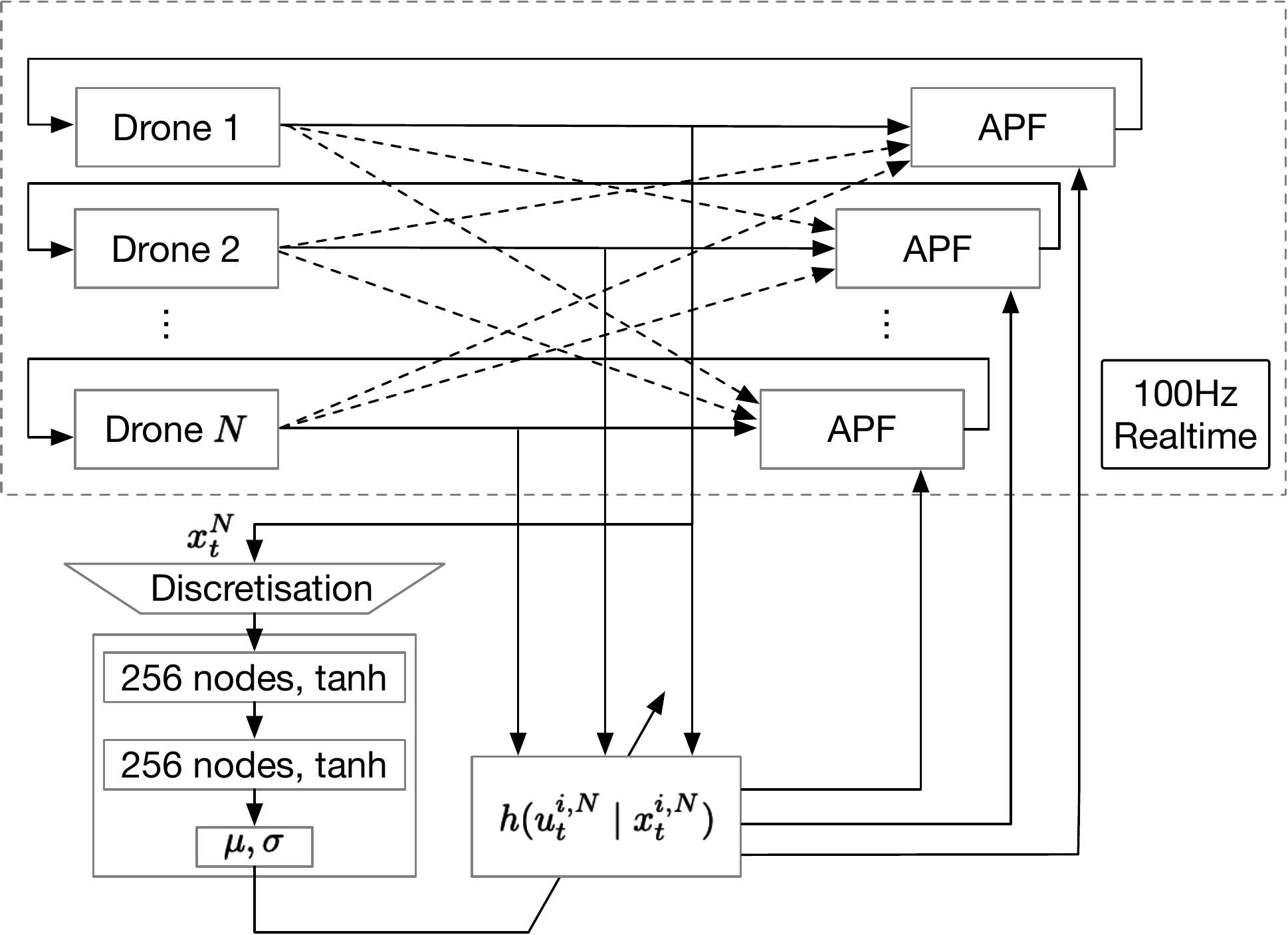}
    \caption{A hierarchical overview of our approach. The learned high-level mean-field control policy sends movement instructions to the (UAV) swarm, while each agent uses a real-time collision avoidance algorithm -- here artificial potential fields (APF)  -- to avoid collisions with others.}
    \label{fig:block_diagram}
\end{figure}

As a result, the MFC approach is a theoretically rigorous approach to approximately optimally solving large-scale swarm problems with complexity independent of $N$.

\begin{corollary}\label{cor:approx_sol_fin}
Under Assumption~\ref{ass}, an optimal solution $\pi^* \in \Pi$ to the MFC problem constitutes an $\varepsilon$-optimal solution to the finite swarm problem, where $\varepsilon \to 0$ as $N \to \infty$.
\end{corollary}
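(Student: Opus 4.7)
The plan is to reduce the corollary to a uniform convergence of objectives and then apply a standard sandwich argument. Concretely, I would first prove that
\[
\Delta_N \coloneqq \sup_{\pi \in \Pi} \left\lvert J^N(\pi) - J(\pi) \right\rvert \to 0 \quad \text{as } N \to \infty.
\]
Granting this, for any near-optimal $\hat\pi^N \in \Pi$ with $\sup_\pi J^N(\pi) - J^N(\hat\pi^N) \leq \eta$, the optimality of $\pi^*$ under $J$ gives
\[
J^N(\hat\pi^N) - J^N(\pi^*) \leq 2\Delta_N + \left[J(\hat\pi^N) - J(\pi^*)\right] \leq 2\Delta_N,
\]
since the bracketed term is non-positive. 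Taking $\eta \to 0$ shows that $\pi^*$ is $2\Delta_N$-optimal for $J^N$, and $2\Delta_N \to 0$ yields the claim with $\varepsilon \coloneqq 2\Delta_N$.

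For the uniform objective convergence, I would use truncation of the infinite horizon. By compactness of $\mathcal P(\mathcal X)$ and continuity of $r$ (Assumption~\ref{ass}), $r$ is bounded, say $\lvert r \rvert \leq M$. For any $\varepsilon > 0$, pick $T$ with $\sum_{t > T} \gamma^t M < \varepsilon/4$. Then, for every $\pi \in \Pi$,
\[
\left\lvert J^N(\pi) - J(\pi) \right\rvert \leq \sum_{t=0}^{T} \gamma^t \, \E\!\left[ \left\lvert r(\mu^N_t) - r(\mu_t) \right\rvert \right] + \frac{\varepsilon}{2}.
\]
Taking the supremum over $\pi$ and invoking Theorem~\ref{thm:conv} at each of the finitely many time steps $t = 0, 1, \ldots, T$ forces the finite sum below $\varepsilon/2$ for $N$ sufficiently large, so $\Delta_N < \varepsilon$ eventually.

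The main obstacle is the infinite-horizon nature of the objective: Theorem~\ref{thm:conv} only supplies vanishing error at each fixed $t$ (though already uniformly in $\pi$). Geometric discounting and boundedness of $r$ resolve this cleanly by allowing a deterministic truncation whose tail depends only on $\gamma$ and $M$, independent of $\pi$ and $N$. Crucially, the uniformity in $\pi$ required for the sandwich step is inherited directly from the statement of Theorem~\ref{thm:conv}, so no additional regularity argument is needed beyond boundedness.
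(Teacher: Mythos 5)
Your proposal is correct and follows essentially the same route as the paper: truncate the discounted sum using boundedness of $r$ (from continuity and compactness of $\mathcal P(\mathcal X)$), control the finitely many remaining terms uniformly in $\pi$ via Theorem~\ref{thm:conv}, and conclude with the standard sandwich argument exploiting optimality of $\pi^*$ for $J$. The only cosmetic difference is that you package the uniform convergence as an explicit quantity $\Delta_N$ and compare against a near-optimal $\hat\pi^N$, whereas the paper writes the same three-term decomposition directly as a minimum over $\pi$.
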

\begin{proof}
For any $\pi \in \Pi$ and $\varepsilon > 0$, we can choose $T$ such that $\sum_{t=T+1}^{\infty} \gamma^t \E \left[ \left| r(\mu^N_t) - r(\mu_t) \right| \right] \leq 2^{-T} \max_\mu 2 |r(\mu)| < \frac \varepsilon 4$, and for sufficiently large $N$ $\sum_{t=0}^{T} \gamma^t \E \left[ \left| r(\mu^N_t) - r(\mu_t) \right| \right] < \frac \varepsilon 4$ by Theorem~\ref{thm:conv}. Therefore, we have $J^N(\pi^*) - \max_{\pi \in \Pi} J^N(\pi) = \min_{\pi \in \Pi} (J^N(\pi^*) - J^N(\pi)) \geq \min_{\pi \in \Pi} (J^N(\pi^*) - J(\pi^*)) + \min_{\pi \in \Pi} (J(\pi^*) - J(\pi)) + \min_{\pi \in \Pi} (J(\pi) - J^N(\pi)) \geq - \frac \varepsilon 2 + 0 - \frac \varepsilon 2 = - \varepsilon$ by the prequel and optimality of $\pi^*$ in the MFC problem.
\end{proof}

\section{Methodology}
In order to remove the two remaining obstacles of (i) solving the MFC problem, and (ii) resolving the real-world gap of MFC for embodied agents, we combine MFC with arbitrary powerful RL and collision avoidance techniques. The overall hierarchical structure is found in Fig.~\ref{fig:block_diagram}. The MFC solution is learned via RL and gives high-level directions, which are realized by each agent while avoiding collisions.

\subsection{Reinforcement learning}
For the MFC problem, it is known that there exists an optimal stationary solution \cite[Theorem~19]{carmona2019model}, which may be found by solving the MFC Markov decision problem (MDP), a single-agent but infinite-dimensional RL problem with $\mathcal P(\mathcal X)$-valued states $\mu_t$ and $\mathcal P(\mathcal U)^{\mathcal X}$-valued actions $h_t$ evolving according to $\mu_{t+1} = T^h(\mu_t)$. To deal with the infinite dimensionality of $\mathcal P(\mathcal X)$ and $\mathcal P(\mathcal U)^{\mathcal X}$, we discretize $\mathcal X$ and use a binned histogram of $\mathcal P(\mathcal X)$ as in \cite{carmona2019model} with $M = 6^2 = 36$ bins by trading off between tractability (good training, low $M$) and performance (high $M$), while $\mathcal P(\mathcal U)$ is parametrized by Gaussians with means $\theta \in \mathcal U$ and diagonal covariances $\sigma_1, \sigma_2 \in (0, 0.25]$, of which the samples $u^i_t \sim h(\cdot \mid x^i_t) = \mathcal N(\theta, \mathrm{diag}(\sigma_1, \sigma_2))$ are clipped to $\mathcal U$. As exact computation of $\mu_t$ is difficult, we use the finite system with $N=300$ agents (though less works fine) and their empirical distribution analogous to particle filtering, which can be understood as directly learning on a large finite swarm.

We use the RLlib 1.13.0 implementation \cite{liang2018rllib} of proximal policy optimization (PPO) RL \cite{schulman2017proximal} and a diagonal Gaussian neural network policy with two hidden $\tanh$-layers of $256$ nodes, sampling clipped values in $[-1, 1]$ affinely transformed to $\theta, \sigma_1, \sigma_2$. Hyperparameters are printed in Table~\ref{tab:hyperparams}, of which sufficiently high minibatch sizes appeared most important.

\begin{table}
    \centering
    \caption{Hyperparameter configurations for PPO.}
    \label{tab:hyperparams}
    \begin{tabular}{@{}ccc@{}}
    \toprule
    Symbol     & Name          & Value     \\ \midrule
    $\gamma$ &   Discount factor &  $0.99$\\
    $\lambda$ &   GAE lambda &  $1$\\
    $\beta$ &   KL coefficient & $0.03$ \\
    $\epsilon$ &  Clip parameter & $0.2$ \\
    $l_{r}$ &   Learning rate & $0.00005$ \\
    $B_{b}$ &  Training batch size &  $4000$ \\
    $B_{m}$ &  Minibatch size &  $1000$ \\
    $T_b$ &  Updates per training batch & $5$ \\ \bottomrule
    \end{tabular}
\end{table}

\subsection{Collision avoidance subroutine}
A solution of the mean-field system does not directly translate into applicable real-world behavior, since the mean-field solution ignores physical constraints. While e.g. UAVs could fly at different heights, a general swarm algorithm should explicitly avoid collisions in order to guarantee suitability of the weakly-interacting MFC model. This is done by separating concerns, decomposing the issue into MFC plus sequences of collision-avoiding navigation subproblems between decision epochs. For example, we could choose $\mathcal U$ slightly smaller than the maximum speed range to allow for additional avoidance maneuvers. Then, assuming the time $\Delta t$ between two MFC decisions $t$ and $t+1$ is sufficiently long, and that agents have finer, direct control over their positions, a collision-avoiding navigation subroutine could approximately achieve the desired positions up to an error that becomes arbitrarily small with agent radius $r$.

For $N$ drones and agent radius $r$ we hence assume existence of such a subroutine $F$ which mildly perturbs all positions and their distribution $\mu^N_t$ at each time step and thereby achieves a collision-free mean field, which we write as $F(\mu^N_t)$, such that $\lVert x_t^i - x_t^j \rVert_2 > 2 r$ for all $i,j$. We further assume that $F$ is near-optimal, i.e. each drone's position is perturbed at most by a distance of $4 N r$. Indeed, this is possible for sufficiently small $r$, e.g. if $\mathcal X = [-m, m]^2$ for $m > 0$: At any $x \in \mathcal X$, on an arbitrary line of length greater $2m$ passing through $x$, we can always choose a position that is at most $4 N r$ away from $x$, as in the worst case all other $N-1$ drones are located on the line along which $F$ moves the drone and have a distance of slightly less than $4 r$ between each other. Under $F$, we can show that a collision-avoiding finite swarm of sufficiently many small agents is solved well by our approach.

\begin{theorem} \label{thm:coll}
Let $\pi \in \Pi$ be an optimal solution to the MFC problem, and let $F$ be the near-optimal collision avoidance subroutine as defined above. Then for each $\varepsilon > 0$ there exists an $N'$ such that for all $N \geq N'$ and agent radii $r_{N, \varepsilon}$, the solution $\pi$ gives an $\varepsilon$-optimal solution to the finite swarm problem with collision avoidance.
\end{theorem}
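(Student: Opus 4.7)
The plan is to combine Corollary~\ref{cor:approx_sol_fin} with a perturbation argument that controls the effect of the collision avoidance subroutine $F$. Write $J^N_F(\pi)$ for the discounted objective of the finite swarm with collision avoidance, and $J^N(\pi)$ for the objective without. I aim to show $J^N_F(\pi) \geq \max_{\pi' \in \Pi} J^N_F(\pi') - \varepsilon$ for all sufficiently large $N$ by taking $r_{N,\varepsilon}$ small enough.

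I would first truncate the infinite horizon at some $T$ such that $\sum_{t=T+1}^\infty \gamma^t \cdot 2\|r\|_\infty < \varepsilon/6$, which is possible because $r$ is bounded on the compact $\mathcal{P}(\mathcal{X})$. By Assumption~\ref{ass} and compactness of $\mathcal{P}(\mathcal{X})$, $r$ is uniformly continuous, so in the Wasserstein-1 metric $W_1$ (which metrizes weak convergence on compact $\mathcal{X}$) there exists $\eta > 0$ with $W_1(\mu, \nu) < \eta$ implying $|r(\mu) - r(\nu)| < \varepsilon/(6T)$.

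The crucial step is to control $W_1(\tilde{\mu}_t^N, \mu_t^N)$ for $t \leq T$ uniformly over $\pi' \in \Pi$, where $\tilde{\mu}_t^N$ and $\mu_t^N$ are the empirical distributions with and without $F$ under a common policy, initial samples, and noise realizations. Matching each agent to its counterpart before and after $F$ gives a per-step perturbation of at most $4Nr$ in $W_1$. Combined with Lipschitz continuity of the transition operator $T^{\pi'_t}$ in $W_1$ (uniform over $\pi' \in \Pi$), inherited from the shared Lipschitz constant of $\Pi$ and the smoothness of the Gaussian kernel, a straightforward induction yields
\[
W_1(\tilde{\mu}_t^N, \mu_t^N) \;\leq\; 4 N r \cdot \textstyle\sum_{s=0}^{t-1} L^s,
\]
where $L$ is the uniform Lipschitz constant. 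Choosing $r_{N,\varepsilon}$ so that $4Nr \cdot \sum_{s=0}^{T-1} L^s < \eta$ then forces $|J^N_F(\pi') - J^N(\pi')| < \varepsilon/3$ uniformly over $\pi' \in \Pi$.

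Combining with Corollary~\ref{cor:approx_sol_fin} at tolerance $\varepsilon/3$ yields $N'$ such that $J^N(\pi) \geq \max_{\pi'} J^N(\pi') - \varepsilon/3$ for all $N \geq N'$. A triangle-inequality chain then delivers $J^N_F(\pi) \geq J^N(\pi) - \varepsilon/3 \geq \max_{\pi'} J^N(\pi') - 2\varepsilon/3 \geq \max_{\pi'} J^N_F(\pi') - \varepsilon$, as desired. The main obstacle will be verifying the uniform Lipschitz bound on $T^{\pi'_t}$ in $W_1$: this requires carefully combining the Lipschitz constant of $\Pi$ in its distribution argument with the convolution-smoothness of the Gaussian transition kernel, and using compactness of $\mathcal{U}$ to control the action-sampling contribution. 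Once that constant is in hand, the rest is bookkeeping via the truncation and discounting.
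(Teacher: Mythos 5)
Your proposal is correct in substance but routes the argument differently from the paper. The paper never couples the two finite systems in $W_1$; instead it runs a single induction over equicontinuous classes of test functions (exactly as in the proof of Theorem~\ref{thm:conv}), directly bounding $\sup_{\pi}\sup_{f}\E[|f(\mu^N_t)-f(F(\mu'^N_t))|]$ by a four-term telescoping decomposition: two terms handled by weak-LLN arguments, one by the induction hypothesis pushed through the modulus of continuity of $T^{\pi_t}$, and one bounded by $\omega_{\mathcal F}(4Nr_{N,\varepsilon})$, with $r_{N,\varepsilon}=o(1/N)$; it then concludes as in Corollary~\ref{cor:approx_sol_fin}. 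Your version instead proves a uniform perturbation bound $|J^N_F(\pi')-J^N(\pi')|<\varepsilon/3$ via a synchronous coupling and then reuses Corollary~\ref{cor:approx_sol_fin} as a black box, which is more modular and makes the dependence $4Nr\sum_s L^s$ explicit; the paper's route avoids ever needing a pathwise coupling and works with the weak-convergence metric throughout. Three points in your sketch need tightening. First, ``common noise realizations'' is not enough to couple the actions: the two agents condition on different positions and different mean fields, so you must draw $(u^i_t,\tilde u^i_t)$ from an optimal (or reparametrized) coupling of $\pi_t(\cdot\mid x^i_t,\mu^N_t)$ and $\pi_t(\cdot\mid \tilde x^i_t,F(\tilde\mu^N_t))$, which controls only the \emph{expected} displacement; consequently your recursion bounds $\E[W_1(\tilde\mu^N_t,\mu^N_t)]$, not a pathwise quantity, and passing to $\E[|r(\tilde\mu^N_t)-r(\mu^N_t)|]$ requires a concave modulus of continuity plus Jensen (the same device the paper uses) rather than the pointwise implication ``$W_1<\eta\Rightarrow|r(\mu)-r(\nu)|<\varepsilon/(6T)$'' applied under the expectation. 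Second, $r$ is only assumed continuous, so you correctly must go through uniform continuity on the compact $\mathcal P(\mathcal X)$ rather than assuming $W_1$-Lipschitz rewards. Third, the Lipschitz bound on $T^{\pi'_t}$ that you flag as the main obstacle is indeed needed, but the paper already invokes exactly this (the uniform modulus $\omega_T$ from the shared Lipschitz constant of $\Pi$) in Theorem~\ref{thm:conv}, so it is available. With these repairs your argument goes through and yields the same $r_{N,\varepsilon}=o(1/N)$ scaling as the paper.
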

\begin{proof}
The definition of $F$ allows us to define new model dynamics with new random mean field variables denoted by $\mu'^N_t$, where we leave out the definition of each agent variable for brevity. For the new dynamics, at each time step $t$ we apply function $F$ to the current mean field $\mu'^N_t$ and underlying positions. Subsequently, the mean field $\mu'^N_{t+1}$ is obtained by applying the usual transition dynamics.

Now, we show via induction over $t$ that for all $t$,
\begin{align}\label{conv_thm_2}
    \sup_{\pi \in \Pi} \sup_{f \in \mathcal F}  \E \left[\left| f(\mu^N_{t}) - f(F(\mu'^N_t)) \right| \right] \to 0.
\end{align}
Analogous to the proof of Theorem~\ref{thm:conv}, the induction start follows from a weak LLN argument. For the induction step,
\begin{align}
    &\sup_{\pi \in \Pi} \sup_{f \in \mathcal F}  \E \left[\left| f(\mu^N_{t+1}) - f(F(\mu'^N_{t+1})) \right| \right] \nonumber \\
    &\quad \leq \sup_{\pi \in \Pi} \E \left[ \omega_{\mathcal F}(d(\mu^N_{t+1}, T^{\pi_t}(\mu^N_t)) \right] \nonumber\\
    &\quad + \sup_{\pi \in \Pi} \sup_{f \in \mathcal F}  \E \left[ \left| f(T^{\pi_t}(\mu^N_t)) - f(T^{\pi_{t}} (F(\mu'^N_t))) \right| \right] \nonumber\\
    &\quad + \sup_{\pi \in \Pi} \sup_{f \in \mathcal F} \E \left[ \left| f(T^{\pi_t}(F(\mu'^N_t))) - f(\mu'^N_{t + 1}) \right| \right] \nonumber\\  
    &\quad + \sup_{\pi \in \Pi} \sup_{f \in \mathcal F} \E \left[ \left| f(\mu'^N_{t+1}) - f(F(\mu'^N_{t + 1})) \right| \right] \label{eq:thm2_third}
\end{align}
where the first two summands converge to zero by arguments as in the proof of Theorem~\ref{thm:conv}. The third term converges to zero by a weak LLN argument while the forth summand is bounded  by $\omega_{\mathcal{F}} (4N r_{N, \varepsilon})$, see the explanation above. By choosing $r_{N, \varepsilon} = o (1/N)$, the last summand in \eqref{eq:thm2_third} converges to $0$. This concludes the induction.

For $\varepsilon$-optimality, we proceed as in Corollary~\ref{cor:approx_sol_fin} and obtain
\begin{align*}
    &\E \left[ \sum_{t=0}^{T-1} \gamma^t \left| r(\mu_t) - r(F(\mu'^N_t)) \right| \right]
    < \frac \varepsilon 2
\end{align*}
for $N$ large enough by applying statement \eqref{conv_thm_2}. The terms beyond $T-1$ can be bounded by $\varepsilon /2$ as in Corollary~\ref{cor:approx_sol_fin}.
\end{proof}

Hence, for a given allowed sub-optimality specification $\varepsilon$, we can find a number $N$ and size $r$ of drones such that solving the MFC problem is $\varepsilon$-optimal in the finite swarm system. In practice, this means that if we can use sufficiently many sufficiently small drones, MFC provides good solutions.

In this work, for simplicity we use artificial potential fields (APF) as in \cite{khatib1985real} with attractive velocity $F_d = 1.5 (\hat x_t^i - x_t^i)$ in simulation, where $\hat x_t^i$ denotes the MFC-based target position, and similarly repulsive velocity from agent $j$ on agent $i$, $F_{ji} = 1.5 c_{\mathrm{rep}} \cdot (\frac{1}{\lVert x_t^i - x_t^j \rVert_2} - 1) \cdot \frac{x_t^i - x_t^j}{\lVert x_t^i - x_t^j \rVert_2^3}$ whenever $\lVert x_t^i - x_t^j \rVert_2 \leq 1$ and zero otherwise, where $c_{\mathrm{rep}} > 0$ is a variable repulsion coefficient. However, we stress that other more advanced collision avoidance algorithms could be used.

\section{Experiments}
In this section, we verify the usefulness of MFC-based robotic swarm control experimentally.

\subsection{Problems}
We consider three problems of increasing complexity to demonstrate our approach. In the following, we consider uniform initial state distributions $\mu_0 = \mathrm{Unif}(\mathcal X)$ and let $\mathcal X = [-2, 2]^2$, allowing circular-constrained, noise-free movement, i.e. circular $\mathcal U$ such that $\lVert u_t^i \rVert_2 \leq 0.2$ with $\epsilon^i_t \equiv 0$.

\paragraph{Aggregation}
In the simple Aggregation or Rendezvous \cite{huttenrauch2019deep} problem, the goal of agents is to aggregate into a point while minimizing movement. Hence, we choose rewards $r(\nu_t) = \iint - \lVert x - \int x \, \nu_t(\mathrm dx, \mathrm du) \rVert_2 - 0.3 \lVert u \rVert_2 \, \nu_t(\mathrm dx, \mathrm du)$ for joint state-actions $\nu_t = \mu_t \otimes h_t \in \mathcal P(\mathcal X \times \mathcal U)$ (see Remark~\ref{remark1}).

\paragraph{Formation}
In the Formation problem, the goal is to achieve an anonymous formation flight of large swarms, i.e. matching the distribution of agent positions with a given distribution -- e.g. for providing coverage for surveillance or communication. The rewards are given by the Wasserstein distance $r(\mu_t) = \inf_{X,Y \colon \mathcal L(X) = \mu_t, \mathcal L(Y) = \mu^*} \mathbb E \left[ \lVert X - Y \rVert_2 \right]$ \cite{villani2009optimal} between agent distribution $\mu_t$ and e.g. a Gaussian mixture $\mu^* = \frac 1 2 \mathcal N(e_1, \mathrm{diag}(0.05, 0.05)) + \frac 1 2 \mathcal N(-e_1, \mathrm{diag}(0.05, 0.05))$ with unit vector $e_1$, computed via the empirical Wasserstein distance between agents and $300$ samples of $\mu^*$. In principle, it is also possible to add movement costs as in Aggregation.

\paragraph{Task allocation}
Lastly, we formulate a problem with stochasticity even in the limit. Consider randomly generated, spatially localized tasks such as providing a UAV-based communication uplink, or emergency operations for clearing rubble and firefighting. We add spatially localized tasks to the model which are observed via an additional histogram of task locations. Here, in each time step, $N_t = \mathrm{Pois}(0.4)$ tasks $l$ arrive at uniformly random points $x^l \in \mathcal X$, up to a maximum of $5$ total tasks. Each task $l$ begins with length $L_t = 10$ and at each time step is processed abstractly by proximity of nearby agents according to $L^l_{t+1} = L^l_t - \Delta L^l(\mu_t)$, $\Delta L^l(\mu_t) \coloneqq \min(1, \int (1 - 2 \lVert x - x^l \rVert_2) \mathbf 1_{\lVert x - x^l \rVert_2 \leq 0.5} \, \mu_t(\mathrm dx)$, until it is fully processed and disappears. The reward is defined by the processed task lengths $r(\mu_t) = \sum_l \Delta L^l(\mu_t)$.

\begin{figure}[tb]
    \centering
    \includegraphics[width=0.95\linewidth]{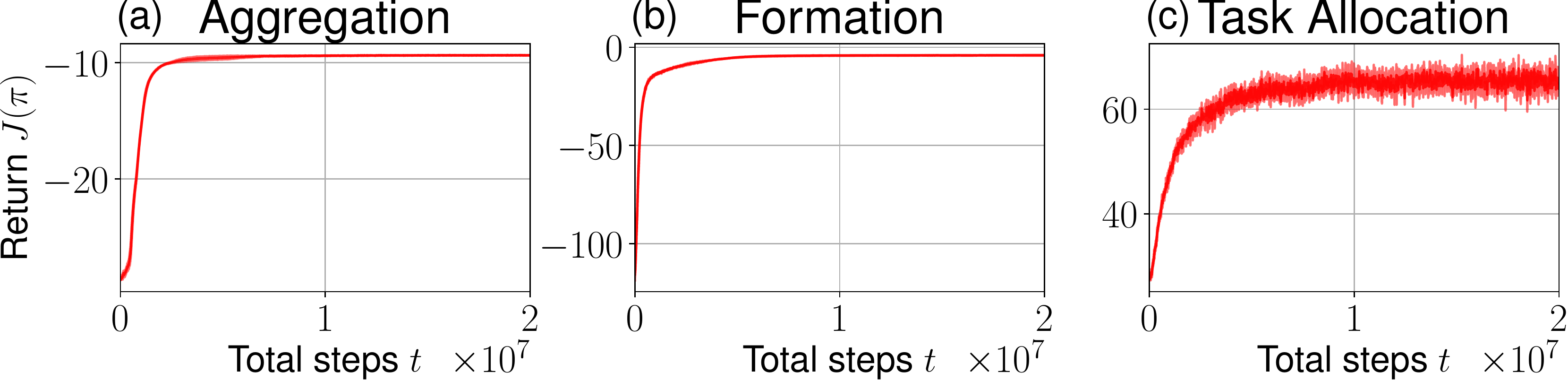}
    \caption{Training curves of the MFC algorithm trained on $N=300$, plotting the average achieved objective over time steps taken, together with its standard deviation over $3$ seeds. The MFC approach leads to very stable learning results for all of our considered problems. (a): Aggregation; (b): Formation; (c): Task Allocation.}
    \label{fig:training_mfc}
\end{figure}

\begin{figure}[b]
    \centering
    \includegraphics[width=0.95\linewidth]{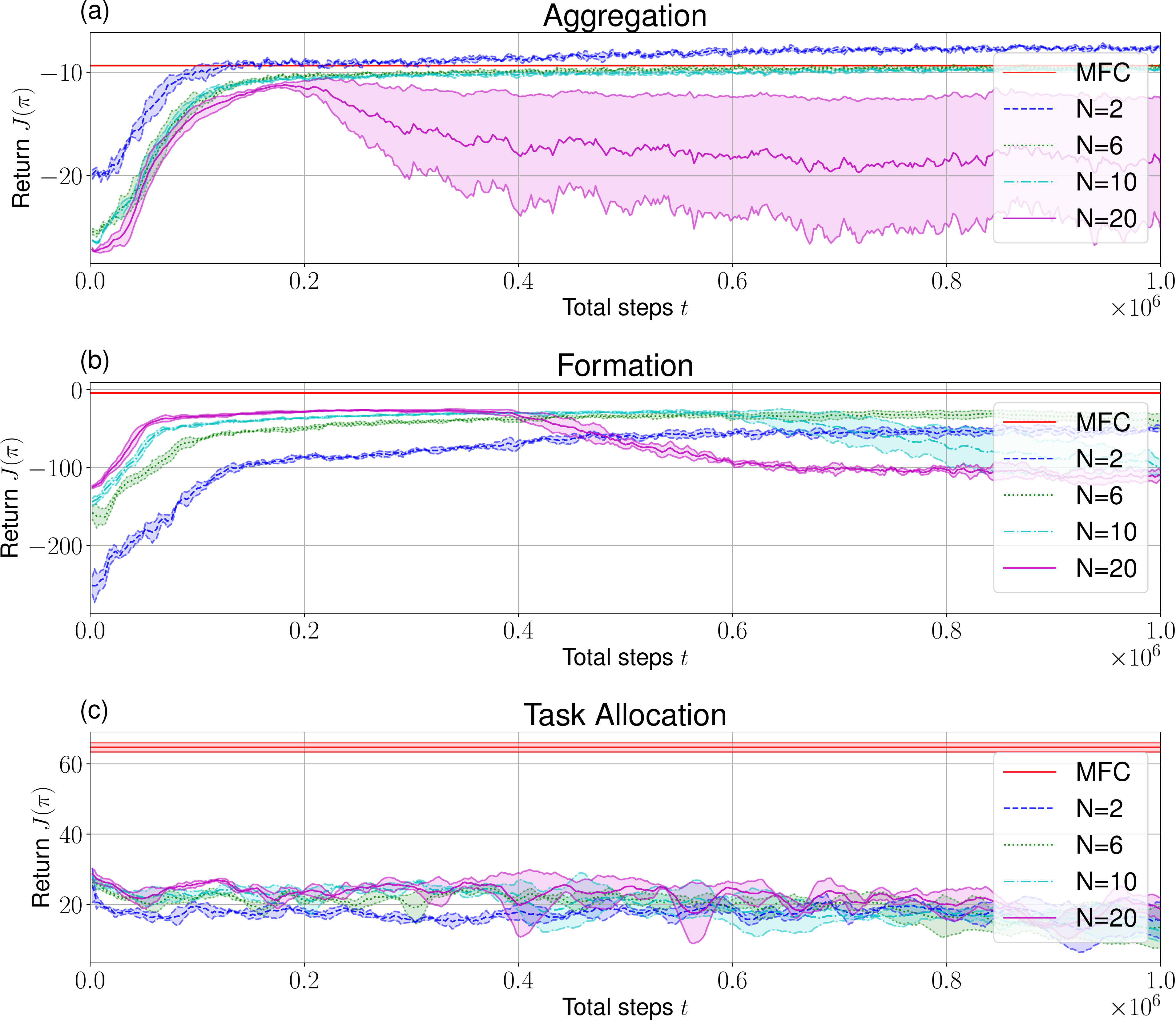}
    \caption{Training curves of the $N$-agent MARL algorithm, plotting the average achieved objective over time steps taken, together with its standard deviation over $3$ seeds and compared to final MFC performance (red, $N=300$). (a): In the simple Aggregation task, MARL and MFC are comparable for few agents, but MARL fails for many agents. (b-c): In more complex scenarios, MFC converges to a better solution than common MARL.}
    \label{fig:training_marl}
\end{figure}

\begin{figure}[t]
    \centering
    \includegraphics[width=0.95\linewidth]{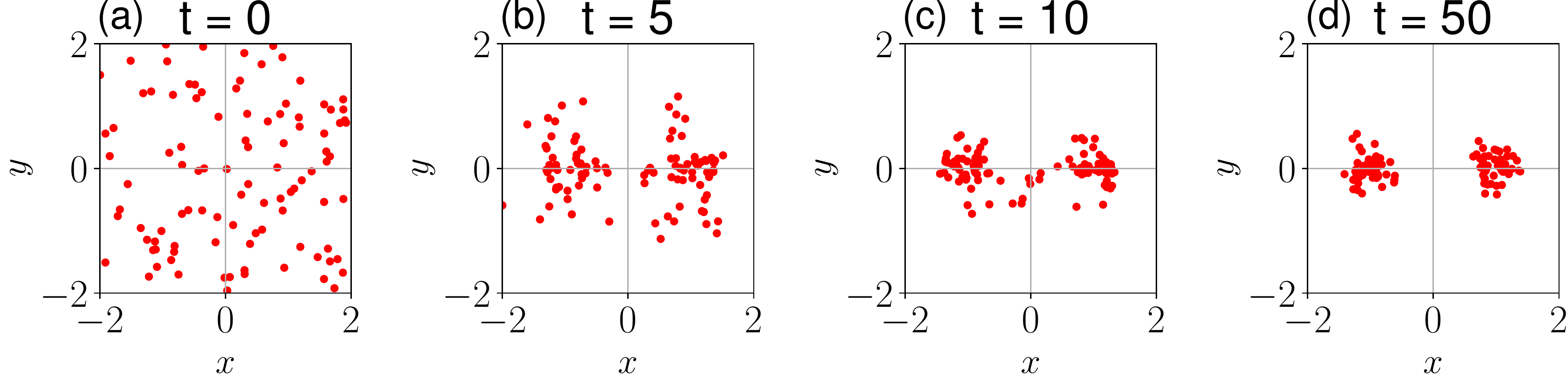}
    \caption{One sample run of the MFC solution to the Formation problem, applied to a system with $N=100$ agents and plotted at times $t \in \{0, 5, 10, 50\}$. Agents successfully form a mixture of two Gaussians.}
    \label{fig:formation}
\end{figure}

\begin{figure}[b]
    \centering
    \includegraphics[width=0.95\linewidth]{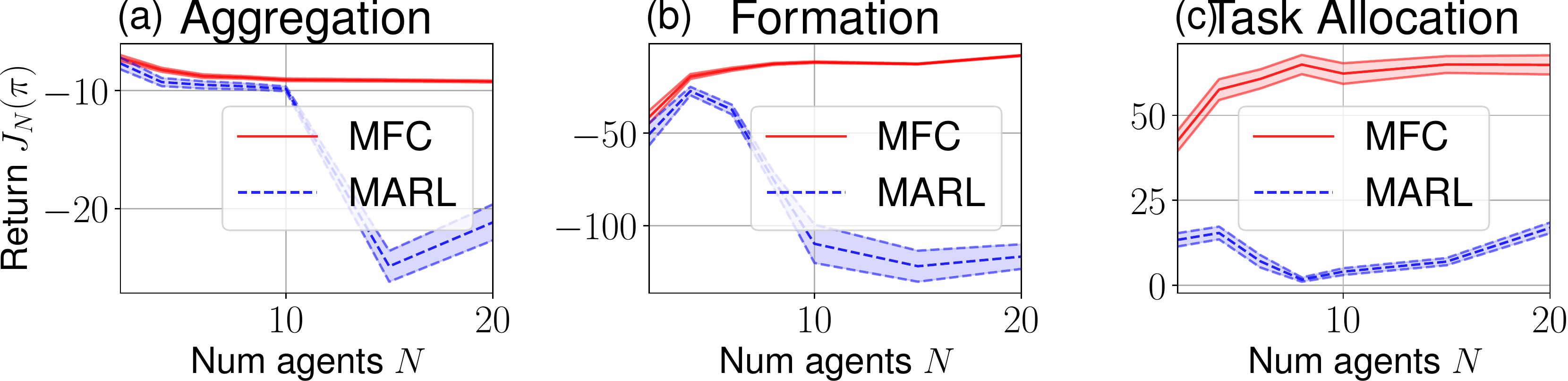}
    \caption{Comparison of achieved objectives in the finite swarm system of MFC and MARL solutions over $100$ sample episodes, with $95\%$ confidence interval (shaded). The MFC algorithm quickly converges to the deterministic, limiting mean-field objective as $N$ becomes large. In simple scenarios such as Aggregation (a), MARL outperforms MFC in the finite system, while in more complex scenarios (b-c), MFC outperforms MARL (at end of training). }
    \label{fig:finite_compare}
\end{figure}

\begin{figure*}[tb]
    \centering
    \includegraphics[width=0.32\linewidth]{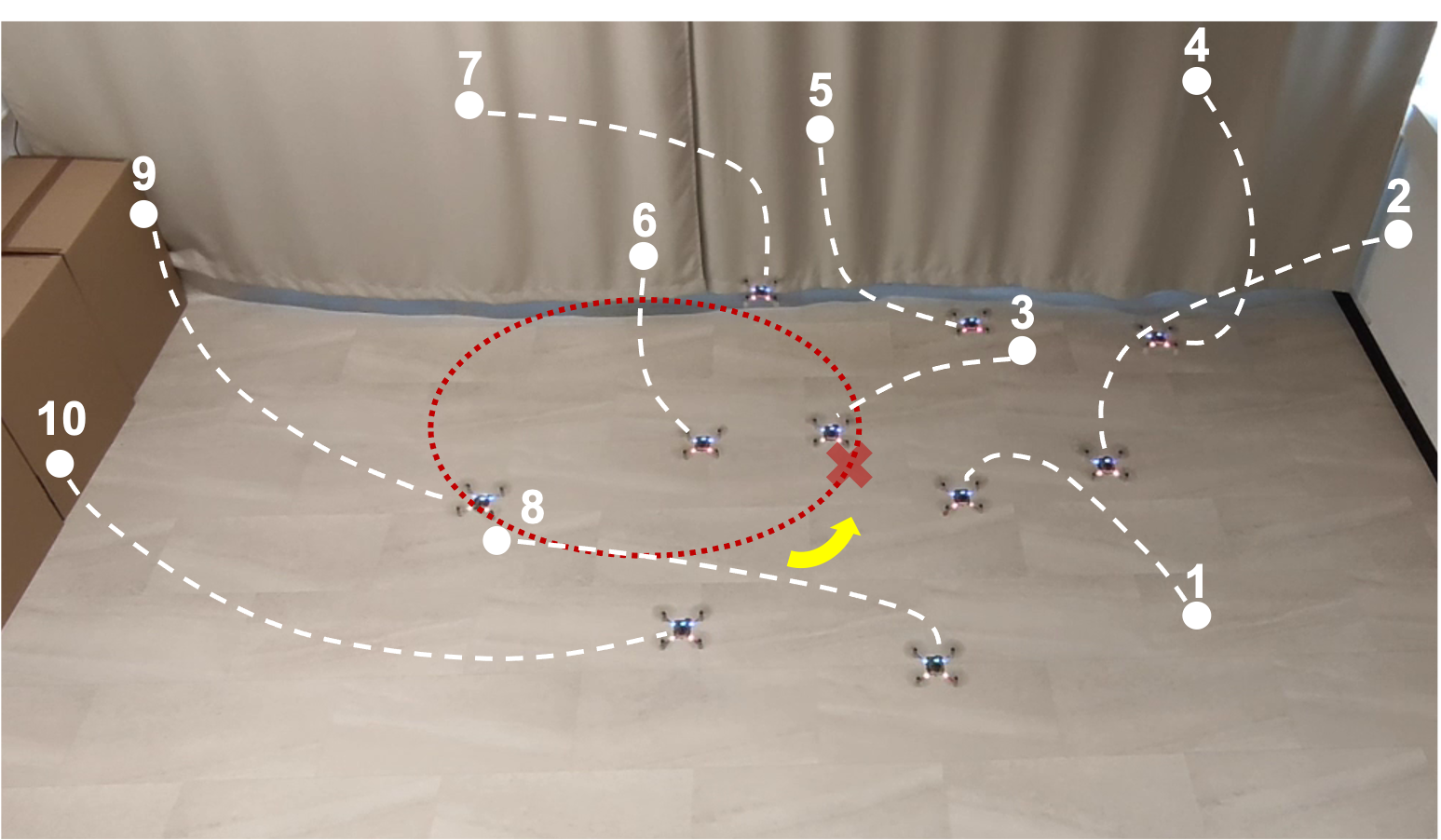} \includegraphics[width=0.32\linewidth]{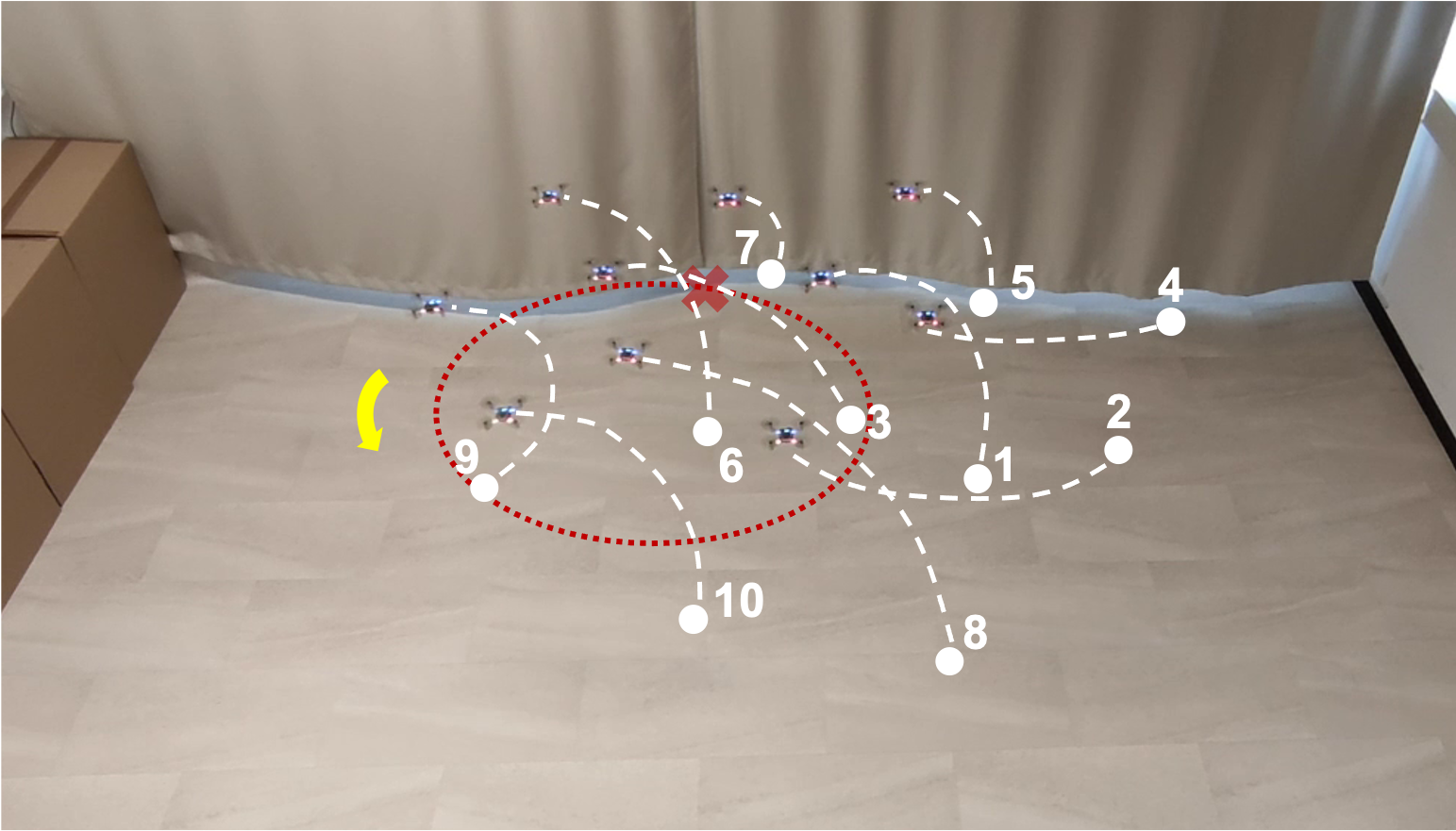} \includegraphics[width=0.32\linewidth]{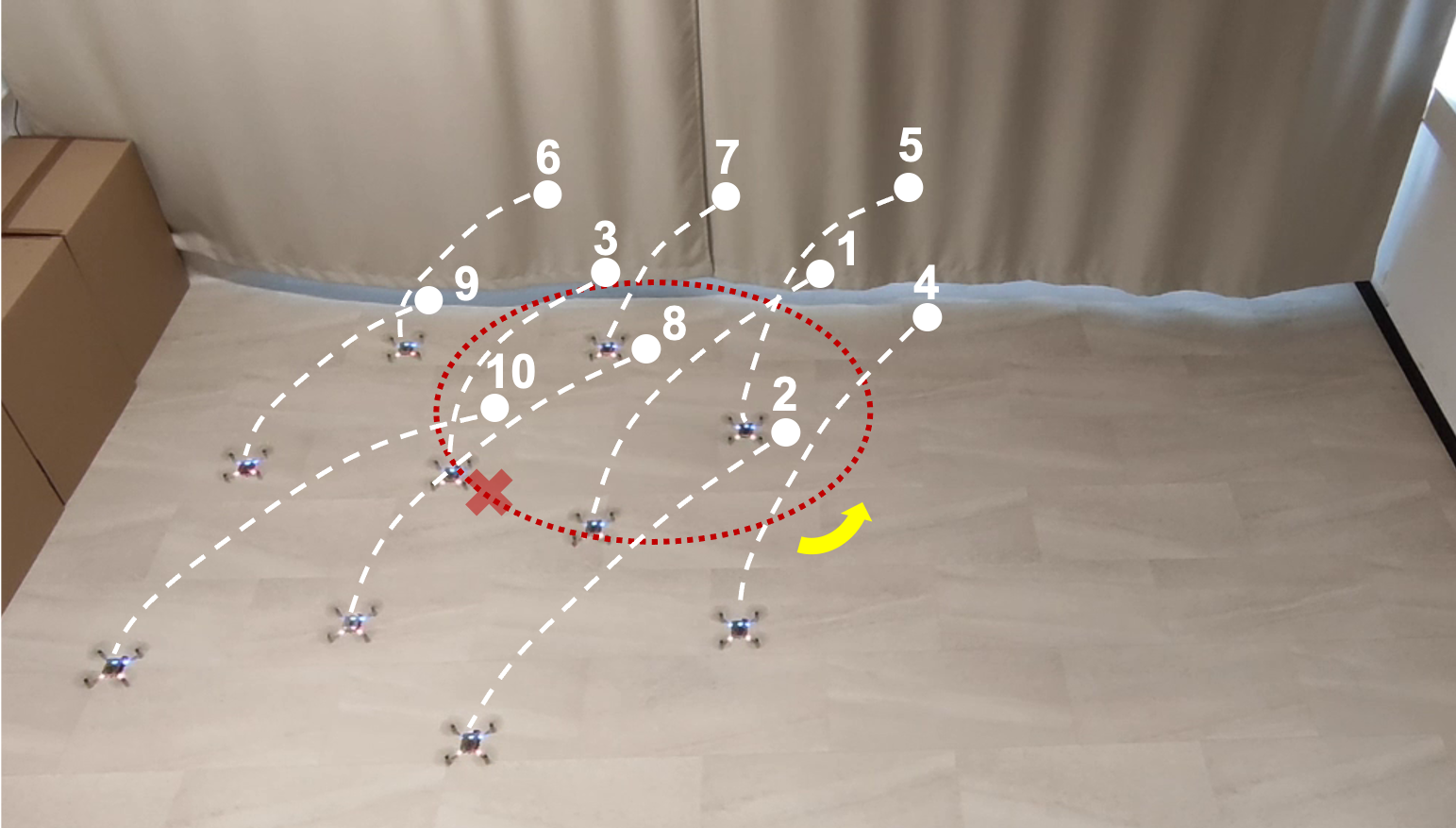}
    \caption{Real world coverage experiment with a swarm of 10 Crazyflie nano-quadcopters and a variant of the Formation problem where agents track a single time-varying Gaussian distribution (current center of Gaussian shown as red cross), moving counter-clockwise on a circle (red dotted line). The drones successfully track the time-varying Gaussian distribution using the open-loop control policy without collision. Time progresses from left to right.}
    \label{fig:real}
\end{figure*}

\subsection{Experimental results}
In the following, we show results demonstrating the power of our MFC framework for task-driven swarm control, namely their theoretical and numerical advantage over standard MARL, the potential for decentralized open-loop control, and the influence of collision avoidance on optimality, both in simulation and in the real world.

\paragraph{Training results}
In our implementation, each training episode consists of $50$, $100$ and $200$ time steps for the Aggregation, Formation and Task Allocation problems respectively, of which the average sum of rewards will constitute the return values shown in the following figures. As can be seen in Fig.~\ref{fig:training_mfc}, the learning curve of PPO in the MFC problem is smoothly increasing as expected, since the MFC MDP leads to a single-agent problem solvable via standard RL with better understood theory than MARL, e.g. \cite{schulman2015trust}.

In contrast, state-of-the-art MARL techniques miss theoretical guarantees. We compare to PPO with parameter-sharing \cite{gupta2017cooperative} and independent learning \cite{tan1993multi}, which has repeatedly achieved state-of-the-art performance in benchmarks \cite{de2020independent, yu2021surprising, papoudakis2021benchmarking, fu2022revisiting} and remains applicable to arbitrary numbers of homogeneous agents. For comparability, we use the same architecture and implementation as in our MFC experiments, outputting parameters of a Gaussian over actions. Each agent simply observes the same information plus the agent's own position. 

As seen in Fig.~\ref{fig:training_marl}, MARL works well in the very simple Aggregation task, but becomes increasingly unstable for many agents, especially in the more complex Formation scenario, finally failing entirely in Task Allocation due to non-stationarity of learning \cite{zhang2021multi}. Although MARL could work for other hyperparameter configurations, it shows that standard MARL can suffer from worse stability than single-agent RL in even high-dimensional single-agent MFC MDPs, congruent with the outstanding issue of theoretical MARL convergence guarantees \cite{zhang2021multi}. As seen exemplarily for the Formation problem in Fig.~\ref{fig:formation} and a variation of the problem with real drones (later) in Fig.~\ref{fig:real}, the MFC solution successfully achieves the desired mixture of Gaussian formation of agents. In Fig.~\ref{fig:finite_compare}, it can be seen that (i) the MFC solution outperforms MARL, and (ii) the MFC solution quickly converges to the limiting deterministic objective in Fig.~\ref{fig:training_mfc} as $N$ grows large, verifying the MFC approximation properties in Theorem~\ref{thm:conv}.

\begin{figure}[tb]
    \centering
    \includegraphics[width=0.95\linewidth]{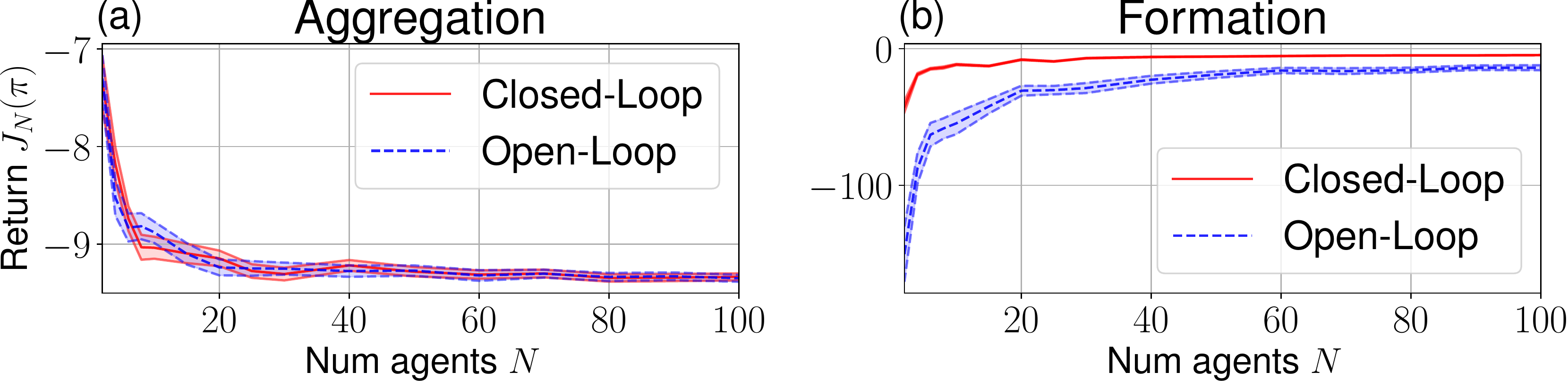}
    \caption{Comparison of mean objectives in the finite swarm system of closed-loop and open-loop MFC over $100$ sample episodes, with $95\%$ confidence interval (shaded). In Aggregation (a), little difference can be seen between the closed-loop and open-loop performance. In Formation (b), the open-loop policy is unable to react to stochastic initialization effects of finite swarm size, only approaching optimality in the large swarm limit.}
    \label{fig:open-loop}
\end{figure}

\begin{figure}[b]
    \centering
    \includegraphics[width=0.95\linewidth]{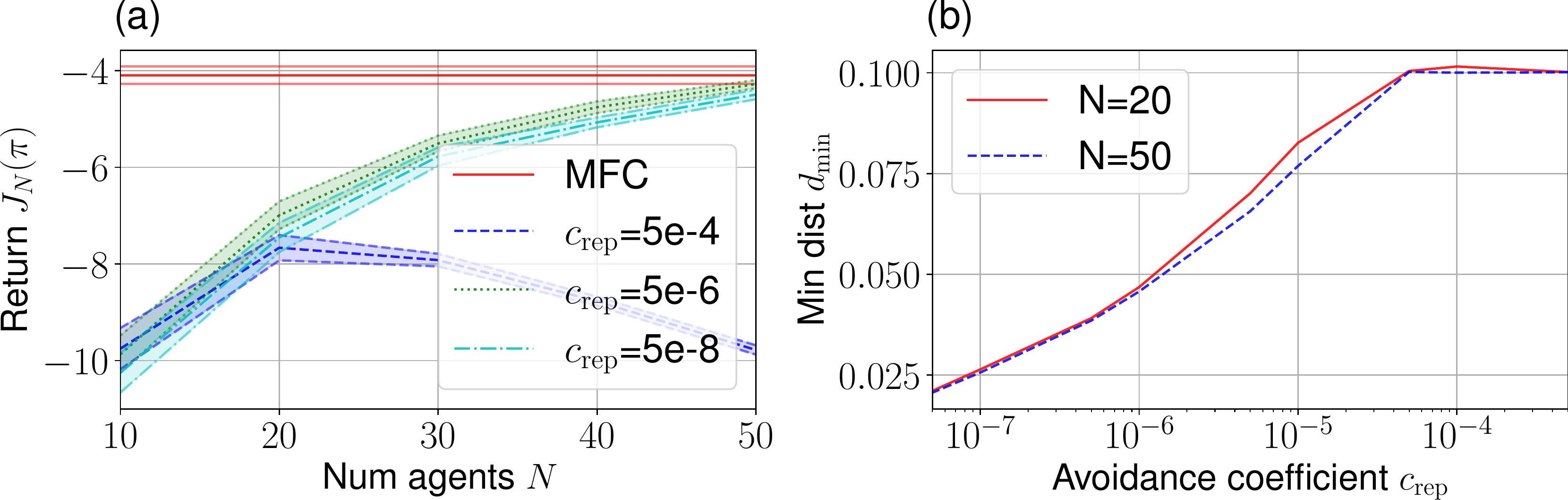}
    \caption{Comparison of results in the finite swarm system of MFC solution with collision avoidance for various collision avoidance coefficients $c_{\mathrm{rep}}$ in the Formation problem. (a): Mean objectives averaged over $100$ sample episodes, with $95\%$ confidence interval (shaded). (b): Minimum occurring inter-agent distance over $100$ sample episodes.}
    \label{fig:coll_avoid}
\end{figure}

\paragraph{Decentralized open-loop control}
In the absence of global information, it makes sense for large swarms to let agents act stochastically and independently, especially since agents are interchangeable and anonymous. For this purpose, as long as the limiting MFC is deterministic (e.g. Aggregation and Formation), we can compute an optimal open-loop control sequence $h_0, h_1, \ldots$ of MFC actions $h_t \in \mathcal P(\mathcal U)^{\mathcal X}$ for a given starting $\mu_0$, and apply $h_t$ to each agent. This results in both open-loop and decentralized control, as each agent moves depending on its own local position only. As expected by determinism of MFC, in Fig.~\ref{fig:open-loop} we observe that the open-loop performance becomes practically indistinguishable from the closed-loop performance in Aggregation, as well as approaches it in Formation for sufficiently large swarms. We note that at least for finite spaces, very recently a similar decentralization result was also rigorously shown \cite{mondal2022on}.

\paragraph{Influence of collision avoidance}
For MFC with collision avoidance, we simulate $\Delta t = 2$ and $100$ explicit Euler steps of length $0.02$ between each decision epoch $t$. Furthermore, to avoid bad initializations, we resample initial states until the minimal inter-agent distance is above $0.1$. As seen in Fig.~\ref{fig:coll_avoid}, the minimal inter-agent distance is easily tuned via $c_{\mathrm{rep}}$, rising up to the initialization distance $0.1$. We find that for strong collision avoidance, the performance deteriorates in the presence of many agents, whereas for smaller collision avoidance coefficients the performance approaches the mean-field limit, verifying Theorem~\ref{thm:coll}.

\paragraph{Illustrative real-life experiment}
Lastly, we show the results of applying a variant of the Formation task -- tracking a single time-variant Gaussian moving on a circle -- to a real fleet of Crazyflie quadcopters \cite{giernacki2017crazyflie}, each peer-to-peer-broadcasting only their local Lighthouse-based state estimates \cite{greiff2019performance}. Here, we use the aforementioned decentralized open-loop control and APF-based collision avoidance. Although our experiments remain small-scale due to space constraints and downwash effects, we nonetheless show that our approach works in practice and can be applied to even small swarm sizes. In the future, we imagine similar approaches to be scaled up to larger fleets. As can be seen in Fig.~\ref{fig:real}, the agents successfully track the formation without colliding.

\section{Conclusion}
In this work, we have proposed a scalable task-driven approach to robotic swarm control that allows for model-free solution of swarm tasks while remaining applicable in practice by using deep RL, MFC and collision avoidance. Our approach is hierarchical, in principle allowing to profit from any state-of-the-art RL and collision avoidance techniques. Our work is a step towards general toolchains for robotic swarm control, which yet remain part of active research \cite{schranz2020swarm}. We have solved part of the limitations of mean-field theory for embodied agents by integrating collision avoidance into the toolchain, but more work on more sparsely interacting mean-field models may be necessary, e.g. for UAV-based communication with strongly neighbor-dependent interaction, by incorporating graph structure \cite{caines2019graphon, cui2022learning, duan2022prevalence}. Extensions to non-linear dynamics and dynamical constraints may be fruitful. Lastly, while our Gaussian parametrization of $\mathcal P(\mathcal U)$ is efficient, the state discretization still suffers from a curse of dimensionality, as the number of bins rises quickly with fineness of discretization, which is state of the art \cite{carmona2019model, gu2021mean} and could be supplemented e.g. by visual techniques \cite{perrin2021generalization}.

\bibliographystyle{IEEEtran}
\bibliography{main}

\end{document}